\algrenewcommand\textproc{} 
\newtheorem{theorem}{Theorem}
\newtheorem{lemma}{Lemma}
\newtheorem{definition}{Definition}
\title{Anytime Single-Step MAPF Planning with Anytime PIBT}
\author{
    Nayesha Gandotra\equalcontrib\textsuperscript{\rm 1}, Rishi Veerapaneni\equalcontrib\textsuperscript{\rm 1}, Muhammad Suhail Saleem\textsuperscript{\rm 1}, Daniel Harabor\textsuperscript{\rm 2}, Jiaoyang Li\textsuperscript{\rm 1}, Maxim Likhachev\textsuperscript{\rm 1}
}
\begin{document}

\maketitle

\begin{abstract}
PIBT is a popular Multi-Agent Path Finding (MAPF) method at the core of many state-of-the-art MAPF methods including LaCAM, CS-PIBT, and WPPL. 
The main utility of PIBT is that it is a very fast and effective single-step MAPF solver and can return a collision-free single-step solution for hundreds of agents in less than a millisecond. However, the main drawback of PIBT is that it is extremely greedy in respect to its priorities and thus leads to poor solution quality.
Additionally, PIBT cannot use all the planning time that might be available to it and returns the first solution it finds.
We thus develop Anytime PIBT, which quickly finds a one-step solution identically to PIBT but then continuously improves the solution in an anytime manner. We prove that Anytime PIBT converges to the optimal solution given sufficient time. We experimentally validate that Anytime PIBT can rapidly improve single-step solution quality within milliseconds and even find the optimal single-step action.
However, we interestingly find that improving the single-step solution quality does not have a significant effect on full-horizon solution costs. 

\end{abstract}

\section{Introduction}
Efficient and collision-free navigation is a fundamental challenge for teams of robots operating in shared environments such as factory floors, warehouses, or autonomous vehicle hubs. The field of Multi-Agent Path Finding (MAPF) develops planning algorithms that enable multiple robots to move from their start locations to designated goals without colliding with or obstructing each other. 

Priority Inheritance with Backtracking (PIBT) is a popular MAPF algorithm that has gained attention due to its simplicity, speed, and effectiveness \cite{okumara2022pibt_jair}. PIBT is a one-step, greedy algorithm that quickly determines the next action for agents to execute and can generate a one-step plan for hundreds of agents in less than a millisecond. 
Due to its efficiency, PIBT has been integrated into various MAPF frameworks. For example, it is used for full-horizon planning in LaCAM \cite{okumura2023lacam}, collision-shielding for learned local MAPF policies in CS-PIBT \cite{veerapaneni2024improving_mapf_policies_with_search}, and as the initial solution generator under tight runtime constraints in WPPL \cite{jiang2024scaling_mapf_competition}. 


While PIBT's greedy nature enables exceptional speed, it also introduces a key limitation: poor solution quality. 
In its current form, PIBT always computes a solution as quickly as possible (usually sub-millisecond), regardless of the available time budget which could range to hundreds or thousands of milliseconds. However, real-life applications typically have longer planning times. A common paradigm for such applications is ``anytime" planning where a planner outputs an initial solution fast and then iteratively improves it until it reaches the timeout, where it then returns the best solution found. PIBT in its current form cannot do this and cannot take advantage of additional planning time to improve its solution.

To this extent, we propose Anytime PIBT, which quickly finds a single-step solution identical to PIBT and then continually refines the solution improving its quality with additional planning time.
Anytime PIBT achieves this by searching through the all possible actions for agents in an efficient anytime manner. 

Anytime PIBT utilizes two main insights. First, we interpret the single-step MAPF problem as a recursive problem where we can employ a Depth-First Search (DFS). This allows us to explore through all actions while saving intermediate solutions and pruning.
Second, we exploit the semi-independent structure of MAPF problems. At any given time, not all agents interact or affect one another's actions. 
By identifying these interacting agents and grouping them, we break the joint action space into smaller, independent subspaces, making Anytime PIBT more efficient.


We theoretically prove that Anytime-PIBT algorithm converges to the optimal single-step solution given sufficient planning time. Additionally, we empirically evaluate its performance on both small and large problem instances (ranging from 20 to 1000 agents) and find that it can consistently improve the single-step cost. However, Anytime PIBT does not significantly improve full-horizon costs compared to PIBT when used with LaCAM and LaCAM*. Still, given the popularity of PIBT, we believe that Anytime PIBT has potential widespread use and promising future directions.

\section{Preliminaries and Related Work}


Multi-Agent Path Finding (MAPF) involves finding collision-free paths for a set of $N$ agents, denoted as ${i = 1, \dots, N}$, where each agent must travel from its start location $s_i^{\text{start}}$ to its goal location $s_i^{\text{goal}}$. In the standard 2D MAPF setup, the environment is discretized into grid cells. Agents can move to adjacent cells in any cardinal direction or remain stationary in their current cell. A valid solution consists of a set of paths $\Pi = \{ \pi_1, ..., \pi_N \}$ where $\pi^0_i = s_i^{start}$, $\pi^{T}_i = s_i^{goal}$ with $T$ representing the maximum timestep of all agents' paths. To ensure validity, the solution must avoid vertex collisions (when two agents occupy the same cell at the same timestep, i.e., $\pi^t_i = \pi^t_j$ for $i \neq j$) and edge collisions (when two agents swap positions between consecutive timesteps, i.e., $\pi^t_i = \pi^{t+1}_j \wedge \pi^{t+1}_i=\pi^t_j$). The standard objective in optimal MAPF is to find a solution $\Pi$ that minimizes the total cost $|\Pi^{0:T}| = \sum_{i=1}^N |\pi_i^{0:T}| = \sum_{i=1}^N \sum_{t=0}^{T-1} c(s_i^t,s_i^{t+1})$. In this work, we assume every action to be of unit cost, $c(s_i^t,s_i^{t+1})=1$, except when an agent remains at its goal (in which case the cost is zero). 

While the above describes the standard formulation of the full-horizon MAPF problem, this work focuses on a simplified variant: the single-step MAPF problem. Here, the objective is to determine the next single actions for agents that minimizes the total cost for all agents to reach their goals, under the assumption that after executing the first action, each agent follows its individual optimal path to the goal. The latter part ignores potential interactions with other agents and therefore is just each agent's individual optimal path to the goal. We note that all performant 2D MAPF methods compute a backward Dijkstra's for each agent where $h^*_i(s) = c^*(s,s_i^{goal})$. Thus our objective is to compute the first action for all agents that minimizes $\Pi^{0:1} = \sum_{i=1}^N (c(s_i^0,s_i^{1}) + h^*_i(s_i^1))$.

\begin{figure*}[t!]
    \centering
    \includegraphics[width=0.85\linewidth]{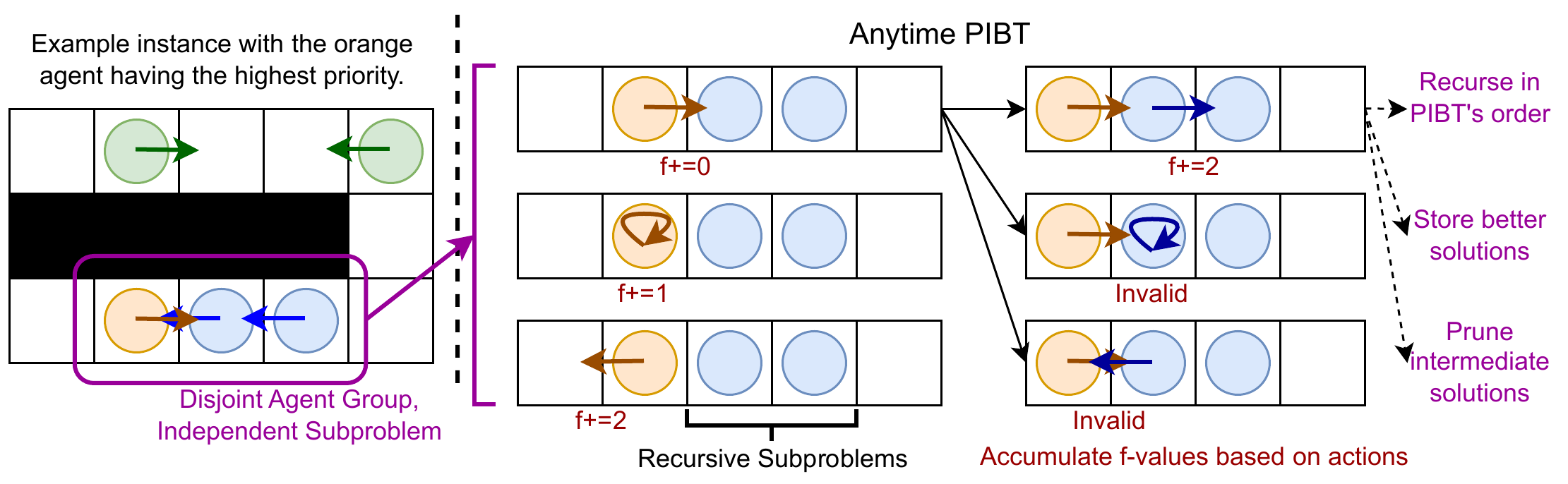}
    \caption{The left example shows six agents with preferred actions denoted in arrows. The orange agent with higher priority would push back the blue agents, when an optimal single-step plan would have the orange agent move back instead. Anytime PIBT first detects initial disjoint agent groups through an initial modified PIBT call. Then Anytime PIBT solves each group by recursing through possible actions and agents in PIBT's order. It stores encountered better solutions and prunes intermediate solutions based on the accumulated penalty.}
    \label{fig:overview}
    \vspace*{-1em}
\end{figure*}

\subsection{PIBT} \label{sec:pibt}
PIBT is a single-step planner that starts out by assigning each agent a priority. It then sequentially plans agents (in descending order of priority) with each agent reserving its next action/location. 
If an agent $R_i$ reserves a location that collides with the single-step path of a previously planned (i.e. higher priority) agent, $R_i$ is not allowed to try that action and must try its next preferred action. If $R_i$ instead moves into a lower priority agent $R_k$, the other lower priority agent $R_k$ is required to plan to make way for $R_i$ (hence ``inherits" the higher priority). This priority inheritance repeats until an action is found. If $R_k$ cannot compute a valid action, it then chooses to stay in place and requires $R_i$ to reserve a different action. This ``backtracking" behavior enables PIBT to remain effective in congested scenarios where agents must try out different actions. 

PIBT is greedy in respect to the provided single-agent heuristic and agent priorities, e.g., the highest priority agent always prefers to minimize its heuristic, even if that would delay all the other agents. Thus PIBT is not single-step optimal, i.e., does not minimize $|\Pi^{0:1}|$ and can return arbitrarily bad solutions.

\subsection{PIBT in Other Methods}
PIBT is popular as it is an extremely fast single-step MAPF solver ($<1$ millisecond for hundreds of agents). In particular, the LaCAM solvers, i.e., LaCAM~\cite{okumura2023lacam}, LaCAM*~\cite{okumara2023lacam_star}, Engineering LaCAM*~\cite{okumara2024engineering_lacam}, search over the joint configuration space and leverage PIBT as their fast joint configuration successor generator. The different methods search over the joint space in different manners with LaCAM using a Depth-First Search (DFS), LaCAM* using anytime A*, and Engineering LaCAM* employing many tools including parallelism. These methods all rely on PIBT for its speed.

PIBT's collision-free single-step behavior has also been used in CS-PIBT as a smart ``collision shield" for post-processing learnt local MAPF policy predictions \cite{veerapaneni2024improving_mapf_policies_with_search}. Since single-step learnt policies could make errors and predict actions that lead to collisions, they propose using PIBT informed by the model predictions to produce collision-free actions.

Traffic Flow~\cite{chen2024traffic_flow} improves PIBT's performance in Lifelong MAPF (where agents are continually assigned new goals) by computing heuristics online informed by agent congestion. This decreases the greedy single-step behavior of PIBT and was shown to improve performance. 




\subsection{Anytime MAPF Algorithms}
Anytime algorithms find initial solutions very fast and then improve them over time, allowing them to be queried at ``anytime" where they return the best solution found so far. Anytime Focal search~\cite{cohen2018anytime_focal}  replaced the high-level focal search of BCBS~\cite{barer2014suboptimalecbs} with an anytime focal search, allowing it to improve the solution quality over time. However, it is unclear how such anytime conflict-based methods would scale to 100s of agents where finding an initial solution itself could take seconds.

A more general popular approach for anytime behavior is to use Large Neighborhood Search (LNS)~\cite{shaw1998lns} in MAPF. LNS is a broad local search method that tries to find better solutions by taking an existing solution, destroying part of it, and then repairing it. MAPF-LNS~\cite{li2021mapf-lns} and MAPF-LNS2~\cite{li2022mapf-lns2} explore different algorithms for generating the initial MAPF solution, destroying, and repairing and showed significant improvement over Anytime EECBS. Using LNS however does not have any guarantees on reaching the optimal solution.

Anytime algorithms are particularly useful when the time cutoff is very small. The Robot Runners competition \cite{chan2024_league_robot_runners_competition} required planning for 100s-1000s of agents in one second. The winning team developed Windowed Parallel PIBT-LNS (WPPL) which used PIBT on weighted graphs to obtain an initial windowed solution (of length 5-15 depending on the instance) and then parallel LNS to improve the solution \cite{jiang2024scaling_mapf_competition}. Planning Interleaved with Execution \cite{zhang2024pie} improves an initial solution (found by LaCAM*) using LNS while simultaneously executing the current best-found path. 



\subsection{Disjoint Agent Groups}
One core aspect of our proposed Anytime PIBT is planning over groups of agents rather than all agents together. This idea is inspired by two main works that utilize this idea.

Operator Decomposition (OD) \cite{standley2010operater_decomposition} uses independence detection to iteratively divide a MAPF problem of all agents into a set of smaller MAPF problems $M<N$ where each smaller problem contains a set of conflicting agents. The WinC-MAPF framework formalizes this notion and defines Disjoint Agent Groups (DJAGs) where agents in different disjoint agent groups do not affect each other's solutions \cite{veerapaneni2024winc_mapf}.



\begin{definition}[Disjoint Agent Groups]
    Given a configuration transition $s_{1:N} \rightarrow s'_{1:N}$, and a set of disjoint agent groups $\{Gr^i\}$, we have the property that for each agent $R_j$ with transition $s_j \rightarrow s_j'$ in disjoint agent group $Gr^i$, there cannot exist another agent in a different group $Gr^k$ that blocked $R_j$ from picking a better path.
\end{definition}

The main implication of DJAGs is that we can plan for each DJAG separately (as agents in different groups do not affect each other).
We use this idea in Anytime PIBT and detect and solve DJAGs, which significantly improves the anytime behavior.


Additionally, finding the optimal solution for each DJAG ensures that we find the global optimal solution (as the solution for each DJAG captures interactions between agents internally, but agents in different DJAGs do not interact with each other). 
Thus, individually finding, optimizing, and recalculating DJAGs (if agents have new conflicts when replanning) guarantees eventually finding the optimal solution (see Theorem \ref{thm:optimality}).

\begin{algorithm}[t!]
\caption{Anytime PIBT}
\label{alg:alg-anytime-pibt}
\textbf{Parameters}: Current states $s^t_{1:N}$, Agent Priorities $AP_{1:N}$, Timeout $T_{out}$ \\ \noindent
\textbf{Output}: $\pi^{t+1}_{1:N}$
\begin{algorithmic}[1] 

\Procedure{AnytimePIBT}{$s^t_{1:N}$, $AP_{1:N}$, $T_{out}$}
    \State $\pi^{t+1}_{1:N} =$ Null, GroupQ = $\emptyset$
    \State PIBTwithGrouping($s^{1:N}$, $AP_{1:N}$) \Comment{Populates $\pi^{t+1}_{1:N}$ and GroupQ} \label{line:pibt-with-groups}
    \State $\pi^{*t+1}_{1:N} = \pi^{t+1}_{1:N}$ \Comment{Current best solution}
    \While{!GroupQ.empty() and not timeout}
        \State $Gr$ = GroupQ.pop() \Comment{Group class, see Alg \ref{alg:alg-anytime-helper}}
        \State $T_{group} = \text{TimePerGroup}(|Gr.AoP|, \text{GroupQ})$ \label{line:group-timeout}
        \State $\pi^{t+1}_{\forall i \in Gr.AoP} = $ Null \Comment{Clear for replanning}
        \State AnytimePIBT-R($Gr$, $Gr.AoP$, Null, $s^t_{1:N}$, 0, $T_{group}$) \Comment{Alg \ref{alg:alg-anytime-helper}, updates $\pi^{*t+1}_{i \in Gr}$, earlyExit, NewGroup} \label{line:main-anytime-helper}
        \If{NewGroup $\neq Gr$} \label{line:main-anytime-new-group}
            \State GroupQ.removeNotDisjointWith(NewGroup)
            \State GroupQ.push(NewGroup)
        \ElsIf{earlyExit} \Comment{Did not finish group}
            \State GroupQ.push($Gr$) \Comment{We can revisit this later if given enough time}
        \EndIf
    \EndWhile
    \State \textbf{return} $\pi^{*t+1}_{1:N}$
\EndProcedure
\Statex



\Function{PIBTwithGrouping}{$s^t_{1:N}$, $AP_{1:N}$}
    \For{agent $k \in $ argsort($AP_{1:N}$)} \label{line:priorities}
        \If{$\pi^{t+1}_k$ is Null} \Comment{If no move planned}
            \State PIBT-Gr($k$, $s^t_{1:N}$) \Comment{Alg \ref{alg:alg-pibt-helper}, updates $\pi^t_{1:N}$}
        \EndIf
    \EndFor
\EndFunction

\Statex
\Function{TimePerGroup}{$K$, GroupQ}
    \State totalAgentsInGroups = $\sum_{Gr^i \in \text{GroupQ}} |Gr^i.AoP|$
    \State \textbf{return} getTimeLeft()$*K/$totalAgentsInGroups
\EndFunction
\end{algorithmic}
\end{algorithm}

\section{Anytime PIBT}
Anytime PIBT utilizes two main insights.
First, as depicted in the middle of Figure \ref{fig:overview}, we view the single-step MAPF as a recursive problem; planning for $N$ agents means assigning a location for one agent and then planning for the rest $N-1$ agents. Given $N$ agents, each with 5 actions, this means we could consider all $5^N$ options through our recursive tree. We then save intermediate solutions in the tree, and can prune out intermediate branches whose solution is worse than our best found so far.
We use PIBT's priority inheritance and backtracking to order our recursive calls. Thus, Anytime-PIBT at its core is a DFS through the action tree that employs standard solution saving and pruning.

However, this recursive structure disregards the semi-independence of agents and requires recursing through / resolving identical subproblems repeatedly. Thus, our second insight is to use the concept of disjoint agent groups from the WinC-MAPF framework \cite{veerapaneni2024winc_mapf} and decompose the single-step MAPF problem into smaller disjoint agent groups (as seen in left of Figure \ref{fig:overview}). This enables us to scale to more agents and have better anytime performance.
Given these two insights (high-level DFS and grouping), we now describe Anytime PIBT using the pseudocode in Algs \ref{alg:alg-anytime-pibt}, \ref{alg:alg-pibt-helper}, and \ref{alg:alg-anytime-helper}. 


\subsubsection{Initial PIBT with Grouping Call}
Anytime PIBT (Alg \ref{alg:alg-anytime-pibt}) starts with a modified PIBT call (line \ref{line:pibt-with-groups}) that keeps track of agents that interact with each other and define disjoint agent groups. The only modification to regular PIBT is that the recursive helper PIBT call (PIBT-Gr, Alg \ref{alg:alg-pibt-helper}) groups agent $k$ when its action is blocked by a higher priority agent $j$ (line \ref{line:pibt-vertex-edge-conflict}) or when agent $k$ bumps into another agent $j$ (line \ref{line:pibt-bump-priority}). Agents $k$ and $j$ with such interactions belong to the same group as they affect each other's ability to pick their best single-step path. On the flip side, agents without such interactions are not in the same agent group. 

Note that our grouping scheme might contain extra agents (e.g., $k$ bumping into $j$ does not necessarily mean $j$ blocks $k$ if they both move in the same direction). However, this is still a valid disjoint agent group and would be fast to plan for later on as these agents move on their optimal actions (and thus we would find the optimal solution immediately and would spend negligible time planning for these agents).

The ``Group" function is a simple union that merges agents that have been grouped. We highlight that grouping is just bookkeeping and adds negligible time. Thus after the end of the initial PIBT call, we have an initial solution identical to PIBT, as well as disjoint groups of agents. In Figure \ref{fig:overview}, this would lead the bottom three agents to be in a single group and the green agents to not be in any group.

\begin{algorithm}[t!]
\caption{PIBT with Grouping Recursive Function, \textcolor{red}{red} denotes modifications to regular PIBT}
\label{alg:alg-pibt-helper}
\textbf{Parameters}: Agent $k$, Current states $s^t_{1:N}$ \\ \noindent
\textbf{Globals and Side Effects}: $\pi^{t+1}_{1:N}$ being populated, Group() updates disjoint agent groups 
\begin{algorithmic}[1] 
\Procedure{PIBT-Gr}{$k$, $s^t_{1:N}$}
    \For{$s^{t+1}_k \in \text{Neighbors}(s^t_k)$} \Comment{Sort by $c(s^t_k, s^{t+1}_k) + h^*(s^{t+1}_k)$}
        \If {$s^{t+1}_k$ is invalid} \Comment{Ignore obstacles}
            \State \textbf{continue}
        \EndIf
        \If {$\exists j$ s.t. $\pi^{t+1}_j = s^{t+1}_k$ or $\pi^{t+1}_j=s^t_k \wedge  \pi^t_j=s^{t+1}_k$} \Comment{Vertex or Edge collision} \label{line:pibt-vertex-edge-conflict}
            \State \textcolor{red}{Group($k, j$)}
            \State \textbf{continue}
        \EndIf
        \State $\pi^{t+1}_k = s^{t+1}_k$
        \If {$\exists$ agent $j \neq k$ with $s^t_j = s^{t+1}_k$} \label{line:pibt-bump-priority}
            \State \textcolor{red}{Group($k, j$)}
            \If {PIBT-Gr($j$, $s^t_{1:N}$)}
                \State \textbf{return} Success
            \EndIf
            \State $\pi^{t+1}_k=$Null
        \Else
            \State \textbf{return} Success
        \EndIf
    \EndFor
    \State \textbf{return} Failure
\EndProcedure
\end{algorithmic}
\end{algorithm}

\begin{algorithm}[t!]
\caption{Anytime PIBT Recursive Function}
\label{alg:alg-anytime-helper}

\textbf{Parameters}: Group $Gr$, Current States $s^t_{1:N}$, Timeout $T_{out}$ \\ \noindent
\textbf{Group Class:} $Gr$ has agents $Gr.AoP$ and best f-value $Gr.F_{b}$ (default $\infty$) \\ \noindent
\textbf{Globals and Side Effects}: Preplanned $\pi^{t:t+1}_{\forall i \notin Gr.AoP}$, updates Paths $\pi^{t+1}_{\forall i \in Gr.AoP}$, Best solution $\pi^{*t+1}_{1:N}$, ``earlyExit" (default True), and ``NewGroup" (defaults $Gr.AoP$) 

\begin{algorithmic}[1] 
\Procedure{AnytimePIBT-R}{$Gr$, Current $AoP$, Agent $k$, $s^t_{1:N}$, Accumulated f-costs $F_c$, $T_{out}$} 
    \If{$k$ is Null} \Comment{If no particular agent to plan}
        \State $k = AoP$.top() \Comment{Pick from AoP by priority}
    \EndIf
    \State $AoP = AoP \setminus k$ 
    \Comment{Less agents for recursive call}
    \For{$s^{t+1}_k \in \text{Neighbors}(s^t_k)$} \Comment{Sort by $c(s^t_k, s^{t+1}_k) + h^*(s^{t+1}_k)$} \label{lines:anytime-all-actions}
        \If{getCurrentPlanningTime() $> T_{out}$}  \label{lines:anytime-timeout}
            \State \textbf{return} \Comment{Time cutoff, set earlyExit to True}
        \EndIf
        \If {$s^{t+1}_k$ is invalid} \label{lines:anytime-collisions}
            \State \textbf{continue}
        \EndIf
        \If {$\exists j$ s.t. $\pi^{t+1}_j = s^{t+1}_k$ or $\pi^{t+1}_j=s^t_k \wedge  \pi^t_j=s^{t+1}_k$} \Comment{Vertex or Edge collision} \label{lines:anytime-vertex-edge-conflict}
            \State Group($k, j$) \Comment{Updates NewGroup}
            \State \textbf{continue}
        \EndIf
        \State $F_{next}$ = $F_c + c(s^t_k, s^{t+1}_k) + h^*(s^{t+1}_k)$ \label{lines:anytime-update-f}
        \If {$F_{next} \geq Gr.{F_{b}}$} \Comment{Prune if not better} \label{lines:anytime-prune-f}
            \State \textbf{return} 
        \EndIf
        \State $\pi^{t+1}_k = s^{t+1}_k$ \label{lines:anytime-reserve-actions}
        \If{$AoP$ is $\emptyset$}
        \Comment{If all agents planned}
            \State $Gr.F_{b}  = F_{next};\,  \pi^{*t+1}_{\forall i \in Gr.AoP} = \pi^{t+1}_{\forall i \in Gr.AoP}$\label{lines:anytime-save-better}
        \Else
            \If {$\exists$ agent $j \neq k$ with $s^t_j = s^{t+1}_k$} \label{lines:anytime-lower-agent} 
                \State Group($k, j$) \Comment{Updates NewGroup}
                \State nextAgent = $j$ \Comment{Priority inheritance}
            \Else \label{lines:anytime-next-agent}
                \State nextAgent = Null
            \EndIf
            \State AnytimePIBT-R($Gr$, $AoP$, nextAgent, $s^t_{1:N}$, $F_{next}$, $T_{out}$) \label{lines:anytime-recursive}
        \EndIf
        \State $\pi^{t+1}_k=$Null

    \EndFor
\EndProcedure
\end{algorithmic}
\end{algorithm}

\subsection{Solving a Group with Anytime PIBT}
After generating the groups, we then iterate through each group and call AnytimePIBT-R (Alg \ref{alg:alg-anytime-helper}) on the group. AnytimePIBT-R at its core is a DFS that goes through all agents in the group and their possible actions. We first describe it without the notion of grouping, e.g., imagine that the group with corresponding \underline{a}gents t\underline{o} \underline{p}lan ($AoP$) contains all agents. We will revisit grouping afterwards.

Similar to PIBT-Gr, AnytimePIBT-R starts by going through the states for agent $k$ (line \ref{lines:anytime-all-actions}) and ignores invalid states (line \ref{lines:anytime-collisions}) or states that conflict with previously planned agents (line \ref{lines:anytime-vertex-edge-conflict}).

One main difference is that it accumulates the f-value of all planned agents by adding the $c(s^k,s') + h^*(s')$ of the chosen state to the current accumulated sum (line \ref{lines:anytime-update-f}). 
If this f-value is greater or equal than the current best solution's $F_b$, we can immediately prune the rest of the DFS branch as we are guaranteed that moving future agents will only increase the f-value (line \ref{lines:anytime-prune-f}). One subtlety is that we sorted $s'$ by increasing $c(s^k, s') + h^*(s')$, and thus can prune out the following neighbor states $s''$ with larger $c(s^k, s'') + h^*(s'')$, allowing us to \textit{return} as opposed to \textit{continue}.

If we do not prune, we update $\pi^{t+1}_k$ accordingly (line \ref{lines:anytime-reserve-actions}). 
If we have finished planning for all agents, we are guaranteed that the accumulated f-value is smaller than $F_b$ (since it was not pruned before) and we thus update the best solution and f-value (line \ref{lines:anytime-save-better}).

If we have not finished planning and bump into another agent, we choose that to be the next agent to plan (line \ref{lines:anytime-lower-agent}) following PIBT's priority inheritance logic. 
We then recurse (line \ref{lines:anytime-recursive}) which goes through the other agents and their action. Note if we did encounter another agent, the recursive call will pick the next (highest priority) agent in $AoP$.

Finally, we reset $\pi^{t+1}_k$ and proceed to consider the next action. We repeat this logic for all actions and therefore consider all possibilities of movements for each agent.

In summary, Anytime PIBT is a DFS that systematically goes through agents and their actions while keeping track of f-values and saving or pruning solutions accordingly. The ability to stop the DFS at anytime (line \ref{lines:anytime-timeout}) allows Anytime PIBT to return the best solution found at any time.

\subsection{Updating Groups on the Fly}
Anytime PIBT described so far searches over actions for agents in the group while having the rest of the agents move following their original PIBT solution. Thus, an agent $k \in Gr.AoP$ (where $Gr.AoP$ are the agents in the $k$'s group $Gr$) cannot conflict with an agent $j \notin Gr.AoP$ inside AnytimePIBT-R, as denoted by the preplanned $\pi^{t:t+1}_{\forall i \notin Gr.AoP}$ in Alg \ref{alg:alg-anytime-helper} ``Globals". Put in other words, when planning for $Gr.AoP$, the agents not in $Gr.AoP$ are assumed to be following their existing paths and must be avoided. 

This can introduce suboptimality issues. For example it is possible that an agent $k$ and $j$ are initially in different groups. However, during the Anytime PIBT call to the group containing $k$, $k$ could consider an action that conflicts with $\pi^{t:t+1}_j$. This action would not be pursued (Alg \ref{alg:alg-anytime-helper}, line \ref{lines:anytime-vertex-edge-conflict}).
Likewise, planning for $j$ could have the same issue. This means that our planning procedure is missing possible actions of considering both agent $k$ and $j$ moving informed by each other. On the flip side, if $k$ and $j$ were in the same group, a joint movement would not be pruned and instead be considered through the recursive logic.
Thus, to maintain optimality, we must update our disjoint agent groups so that $k$ and $j$ are grouped together. Note this in turn means that their respective groups should be merged too.



We do this by keeping track of the grouped agents in AnytimePIBT-R (line \ref{lines:anytime-vertex-edge-conflict}). However, we do not terminate early and instead continue the current AnytimePIBT-R call to make progress and encounter other potential agents. Then in AnytimePIBT after AnytimePIBT-R return (Alg \ref{alg:alg-anytime-pibt}, line \ref{line:main-anytime-helper}), AnytimePIBT adds a new found group (if it exists) and removes non-disjoint groups (e.g., the old groups that just got merged into the new group). This group (i.e., its corresponding agents) will be replanned later on.

\subsubsection{Diving Time Across Groups}
Given a set of groups to solve, a naive anytime implementation would allocate $T_{out}$ to each AnytimePIBT-R grouped call and terminate once the cumulative time exceeds $T_{out}$. However, this would prioritize earlier groups. Thus in Alg \ref{alg:alg-anytime-pibt} (line \ref{line:group-timeout}) an additional optimization is to allocate time based on the group's size. Other methods are possible but we found this to work well. 


\subsection{Theoretical Properties}

\begin{lemma} \label{lemma:group-optimal}
    Given a sufficiently large timeout $T_{out}$, AnytimePIBT-R finds the optimal single-step solution $\pi^{t+1}_{\forall i\in Gr.AoP}$ for agents in $Gr.AoP$ given fixed $\pi^{t:t+1}_{\forall i\notin Gr.AoP}$ that should not be conflicted with.
\end{lemma}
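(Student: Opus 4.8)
The plan is to recognize AnytimePIBT-R as a branch-and-bound depth-first search over the joint single-step action space of the agents in $Gr.AoP$, and then to prove optimality by combining (i) soundness of the pruning rule with (ii) completeness of the traversal. The single-step objective restricted to the group is $F(\pi^{t+1}) = \sum_{i \in Gr.AoP} \big(c(s_i^t, s_i^{t+1}) + h^*(s_i^{t+1})\big)$, and the first observation I would make is that the accumulated cost $F_c$ passed down the recursion is exactly the partial sum of these per-agent terms over the agents assigned so far. Hence at a leaf (when $AoP$ becomes empty and the save at line \ref{lines:anytime-save-better} fires) the value $F_{next}$ equals the true objective $F$ of that complete joint action: the heuristic is not a relaxation here but the exact cost, since by assumption each agent follows its individual optimal path $h^*$ after its first step. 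This makes saving the minimum-cost leaf equivalent to returning the optimum, provided the search reaches or safely prunes every valid joint action.

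For pruning soundness, I would argue that because every term $c(s_i^t,s_i^{t+1}) + h^*(s_i^{t+1}) \ge 0$, the accumulated cost is monotonically non-decreasing as additional agents are assigned. Therefore whenever the test at line \ref{lines:anytime-prune-f} holds, i.e. $F_{next} \ge Gr.F_{b}$, every completion of the current partial assignment has objective $\ge F_{next} \ge Gr.F_{b}$, so no strictly-better solution can be discarded. The choice to \emph{return} rather than \emph{continue} is justified by the fact that neighbors are enumerated in non-decreasing order of $c(s_k^t,\cdot) + h^*(\cdot)$ (line \ref{lines:anytime-all-actions}): once one successor is pruned, all later successors have at least as large an f-contribution and would be pruned too. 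I would also note that the feasibility filters — skipping invalid cells and skipping moves that vertex- or edge-collide with already-reserved paths (including the fixed external paths $\pi^{t:t+1}_{\forall i \notin Gr.AoP}$) — only ever remove joint actions that are themselves infeasible, so they also preserve the optimum.

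The main obstacle, and the step I would spend the most care on, is \textbf{completeness} of the traversal under PIBT's priority-inheritance ordering. I would prove by induction on $|AoP|$ that, absent a timeout, every collision-free joint assignment of next states to the agents of $Gr.AoP$ (respecting both the internal constraints and the fixed external paths) is realized as some leaf of the DFS, unless it lies on a branch pruned by a bound no larger than the incumbent. The crucial point — and what separates Anytime PIBT from greedy PIBT — is that the for-loop at line \ref{lines:anytime-all-actions} iterates over \emph{all} successors of the currently expanded agent, including moves that are suboptimal for that agent (e.g. a high-priority agent stepping aside), so no action of any single agent is omitted. For the forced choices, I would argue that priority inheritance loses no valid completion: if agent $k$ reserves $s_k^{t+1} = s_j^t$ (line \ref{lines:anytime-lower-agent}), then in \emph{any} collision-free completion agent $j$ must vacate $s_j^t$, so committing to plan $j$ next is without loss of generality, and transitive ``make room'' requirements are handled by chains of inheritance, each link forced for the same reason. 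A branch dead-ends (the inherited agent finds no valid move) only after trying to push every blocking agent recursively; such a dead-end therefore certifies that the partial assignment built so far admits no feasible completion, matching PIBT's backtracking, and so discards no feasible joint action.

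Finally I would assemble the two halves: completeness guarantees that an optimal joint action $\pi^{t+1}_{\mathrm{opt}}$ is either enumerated as a leaf or pruned, and pruning-soundness guarantees that if it is pruned the incumbent already had cost $\le F(\pi^{t+1}_{\mathrm{opt}})$; in either case the best solution saved in $\pi^{*t+1}_{\forall i \in Gr.AoP}$ attains the minimum objective over all valid group assignments. Since a ``sufficiently large'' $T_{out}$ means the timeout check (line \ref{lines:anytime-timeout}) never fires and the finite DFS runs to completion, AnytimePIBT-R returns the optimal single-step solution for $Gr.AoP$, which is the claim.
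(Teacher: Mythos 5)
Your proposal is correct and follows essentially the same route as the paper's proof, which likewise argues that all valid transitions are enumerated, that transitions conflicting with already-planned agents (internal or external to $Gr.AoP$) are safely skipped, and that pruning is sound because each per-agent term $c(s^t_k,s^{t+1}_k) + h^*(s^{t+1}_k) \geq 0$. The only difference is rigor: you explicitly prove the completeness of the DFS under priority inheritance (via induction on $|AoP|$) and justify the \emph{return}-versus-\emph{continue} pruning using the sorted neighbor order, both of which the paper asserts only informally.
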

\begin{proof}
Anytime PIBT considers all valid transitions for each agent except those that conflict with previously planned agents (lines \ref{lines:anytime-vertex-edge-conflict}) or are pruned (lines \ref{lines:anytime-prune-f}). Transitions conflicting with previously planned agents (populated by $\pi^{t:t+1}_{i\notin Gr.AoP}$ as well as agents planned in the current recursive call) can be safely skipped. In respect to pruning, since $c(s^t_k,s^{t+1}_k) + h(s^{t+1}_k) \geq 0$ for all possible $s^t_k, s^{t+1}_k$ and agents $k$, if an intermediate call has $F_c \geq F_{best}$, it will never lead to a better solution and thus can be pruned. Therefore Anytime PIBT will eventually find the optimal single-step solution $\pi^{t+1}_{\forall i\in Gr.AoP}$ satisfying it does not conflict with $\pi^{t:t+1}_{\forall i\notin Gr.AoP}$.
\end{proof}

\begin{theorem} \label{thm:optimality}
    Given a sufficiently large timeout $T_{out}$, Anytime PIBT will eventually find the optimal single-step solution for all agents.
\end{theorem}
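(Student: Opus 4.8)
The plan is to lift Lemma~\ref{lemma:group-optimal}, which guarantees intra-group optimality once the external transitions $\pi^{t:t+1}_{\forall i \notin Gr.AoP}$ are held fixed, to a global statement by showing that the outer loop of Alg~\ref{alg:alg-anytime-pibt} drives the group partition to a fixed point whose per-group optima compose into the optimal single-step cost $|\Pi^{0:1}|$. The two ingredients I would combine are (i) Lemma~\ref{lemma:group-optimal} applied to each group, and (ii) the Disjoint Agent Groups property, which states that when the partition is genuinely disjoint no agent is blocked by an agent of another group, so the total cost decomposes as a sum of independent per-group costs and is minimized exactly when each group is. The theorem then reduces to proving that the merge-and-replan mechanism terminates at a partition that is a valid set of DJAGs.

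First I would establish termination of the \textbf{While} loop. Treating each unconflicted agent as a singleton block, the partition produced by PIBTwithGrouping has at most $N$ blocks. Whenever AnytimePIBT-R detects a vertex/edge collision (line~\ref{lines:anytime-vertex-edge-conflict}) or a bump between an in-group agent and an out-of-group agent, \texttt{Group} records a merged \texttt{NewGroup} and Alg~\ref{alg:alg-anytime-pibt} removes the non-disjoint blocks and pushes their union, strictly reducing the block count by at least one. Coarsening can therefore happen at most $N-1$ times and never increases the count, so only finitely many merges occur. For a sufficiently large $T_{out}$ the time-division scheme gives each group an effectively unbounded budget, so each AnytimePIBT-R call explores its finite action tree (at most $5^{|Gr.AoP|}$ leaves) to completion with \texttt{earlyExit} false, and a completed group that triggers no merge is removed permanently. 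Hence after finitely many pops interleaved with at most $N-1$ merges, \texttt{GroupQ} empties and the loop halts.

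Next I would argue that the terminal partition is a valid DJAG partition. When the loop halts, every surviving group was last solved to optimality (by Lemma~\ref{lemma:group-optimal}, having run to completion under the unbounded budget) \emph{without} triggering a merge against the other groups' fixed transitions. By the definition of the grouping predicate, ``no merge'' means that no in-group agent was forced off a better action by a collision or bump with an out-of-group agent---precisely the condition that no agent in one group is blocked by an agent in another group. This is exactly the Disjoint Agent Groups property, so the optimal solutions found for the individual groups concatenate into a globally optimal single-step solution, which is the $\pi^{*t+1}_{1:N}$ returned on the final line of Alg~\ref{alg:alg-anytime-pibt}.

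The main obstacle I anticipate is this fixed-point step rather than the termination count. Because groups are solved one at a time and an already-finalized group is not re-solved when a later group changes its transitions, I must rule out the possibility that re-optimizing one group silently invalidates another's optimality through a cross-group interaction that never surfaced as a collision or bump. The resolution hinges on showing that the grouping predicate on line~\ref{lines:anytime-vertex-edge-conflict} together with the bump check captures \emph{every} way one agent can block another in the single-step model, so that the absence of any merge at termination is equivalent to genuine disjointness; any residual blocking interaction would have forced a further merge and kept \texttt{GroupQ} non-empty, contradicting halting. A secondary, milder point is the over-grouping noted earlier (a bump need not imply blocking): this only enlarges groups and never removes agents, so Lemma~\ref{lemma:group-optimal} still solves each enlarged group optimally and supersets of valid DJAGs remain valid for composition, leaving the final optimality argument intact.
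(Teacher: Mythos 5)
Your proposal is correct and takes essentially the same route as the paper: both lift Lemma~\ref{lemma:group-optimal} to global optimality through the detect-conflict-and-merge mechanism, arguing that any cross-group blocking interaction would be caught by the grouping predicate (Alg~\ref{alg:alg-anytime-helper}, line~\ref{lines:anytime-vertex-edge-conflict}) and trigger a regroup-and-replan (Alg~\ref{alg:alg-anytime-pibt}, line~\ref{line:main-anytime-new-group}). The difference is organizational rather than substantive---the paper argues by contradiction (a suboptimally-acting agent must, by Lemma~\ref{lemma:group-optimal}, be blocked by an out-of-group agent $j$, but $j$ would then have been detected and merged, a contradiction), whereas you argue constructively via a terminating fixed point; your explicit bound of at most $N-1$ merges and your naming of detection-completeness (that the conflict/bump checks capture \emph{every} way one agent can block another, despite pruning) as the crucial hinge are both points the paper leaves implicit or simply asserts, so your version is, if anything, the more careful one.
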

\begin{proof}

Suppose Anytime PIBT does not find the optimal single-step solution. Then there must exist some agent $k$ whose chosen action is worse than its action in the optimal solution. According to Lemma \ref{lemma:group-optimal}, AnytimePIBT-R will optimally plan agents in $Gr.AoP$ given fixed $\pi^{t:t+1}_{\forall i\notin Gr.AoP}$, so it must be the case that $k$'s optimal action is blocked by $\pi^{t:t+1}_{j}$ of some $j \notin Gr.AoP$.

Since $Gr.AoP$ is defined by a disjoint agent group, this means that $k$ and $j$ are in different disjoint agent groups.
However, $j$ would be detected in AnytimePIBT-R (Alg \ref{alg:alg-anytime-helper}, line \ref{lines:anytime-vertex-edge-conflict}) and $k, j$ would be grouped together.
Then, Anytime PIBT would plan for this new group (Alg \ref{alg:alg-anytime-pibt}, line \ref{line:main-anytime-new-group}), which contradicts the earlier statement of $k$ and $j$ being in different disjoint agent groups. Thus, AnytimePIBT will eventually find the optimal single-step solution.
\end{proof}




\subsection{Anytime PIBT Tiebreak}
As we will see in the next section, Anytime PIBT actually has a worse success rate than PIBT as its optimal one-step action means that higher priority agents may not push away lower priority agents and can get stuck. Therefore, we implemented a variant where agents only consider their actions that have the best individual f-value, i.e., we search over tiebreaking between agent's best actions. We call this variant Anytime PIBT Tiebreak. This does not ensure global optimality but does retain regular PIBT properties (e.g., that the highest priority agent will make progress towards its goal at every step if the path to the goal is biconnected).

\section{Experiments} \label{sec:experiments}
We evaluate Anytime PIBT on using the standard MAPF benchmark maps \cite{stern2019mapfbenchmark} with varying numbers of agents and anytime cutoffs.
Since Anytime PIBT is a drop in replacement for PIBT, we also evaluate its effect in LaCAM and LaCAM* which both internally use PIBT. All instances were run with a 60 second cutoff time (i.e., sum of all planning times across all iterations).

Our experiments aimed to answer two questions. First, how much does Anytime PIBT improve single-step performance compared to PIBT? Second, how much does Anytime PIBT improve full horizon planning compared to using PIBT (e.g., by itself or in LaCAM, LaCAM*)?
Our results find that we perform quite well in finding single-step solutions and many times can find optimal single-step solutions. Interestingly, we see that this however does not translate to any meaningful full horizon solution quality gains.

\subsection{Single-Step Solutions} \label{sec:single-step}
Figure \ref{fig:timestep-costs} shows an example run of Anytime PIBT with 1 second per-timestep cutoff on map den520d with 500 agents. At every timestep, we record and plot the initial normalized f-value (blue) and the final normalized f-value after Anytime PIBT finishes (orange). Note that the initial value (blue) is the result of the initial PIBT call. The normalized f-value subtracts out the lowerbound f-value (i.e., the sum of each agent's individual minimum f-value), so a value of zero means that all agents are going on their individual optimal action. 
We see that Anytime PIBT is able to consistently improve the f-value compared to the initial (PIBT) result.

Additionally, in this scenario, Anytime PIBT never reached the 1 second cutoff time, which means that it was able to find the optimal single-step solution for every timestep.

Figure \ref{fig:histogram} shows a histogram of the per-timestep across different Anytime PIBT deadlines in \textit{milliseconds} for 25 scenes on den520d with 500 agents. The x-axis denotes PIBT's f-value minus Anytime PIBT's f-value, e.g. a change of 4 means that Anytime PIBT improved (decreased) the f-value by 4 compared to PIBT's initial solution. We see that Anytime PIBT can make a non-trivial improvement with 0.1 milliseconds (red) and has more substantial single-step improvements within more time (purple, green).
We generally saw similar patterns in Figures \ref{fig:timestep-costs} and \ref{fig:histogram} across the different maps and agents with Anytime PIBT having larger impacts on more crowded instances. 

We found that grouping was a crucial part to Anytime PIBT's fast performance. Without grouping, Anytime PIBT's number of recursive calls increases exponentially with the number of agents such that in most cases the algorithm would not finish planning a single-step for 100 agents within 1 minute. This was due to the fact that without agent groups, finding the optimal action for an agent low in the priority list requires recursing through all the possible actions for all the agents above it in the priority list, which is highly inefficient if the higher agents are unrelated to the lower one. Hence, our group logic is critical for Anytime PIBT efficiency.

\begin{figure}[t!]
    \centering
    \includegraphics[width=0.45\textwidth]{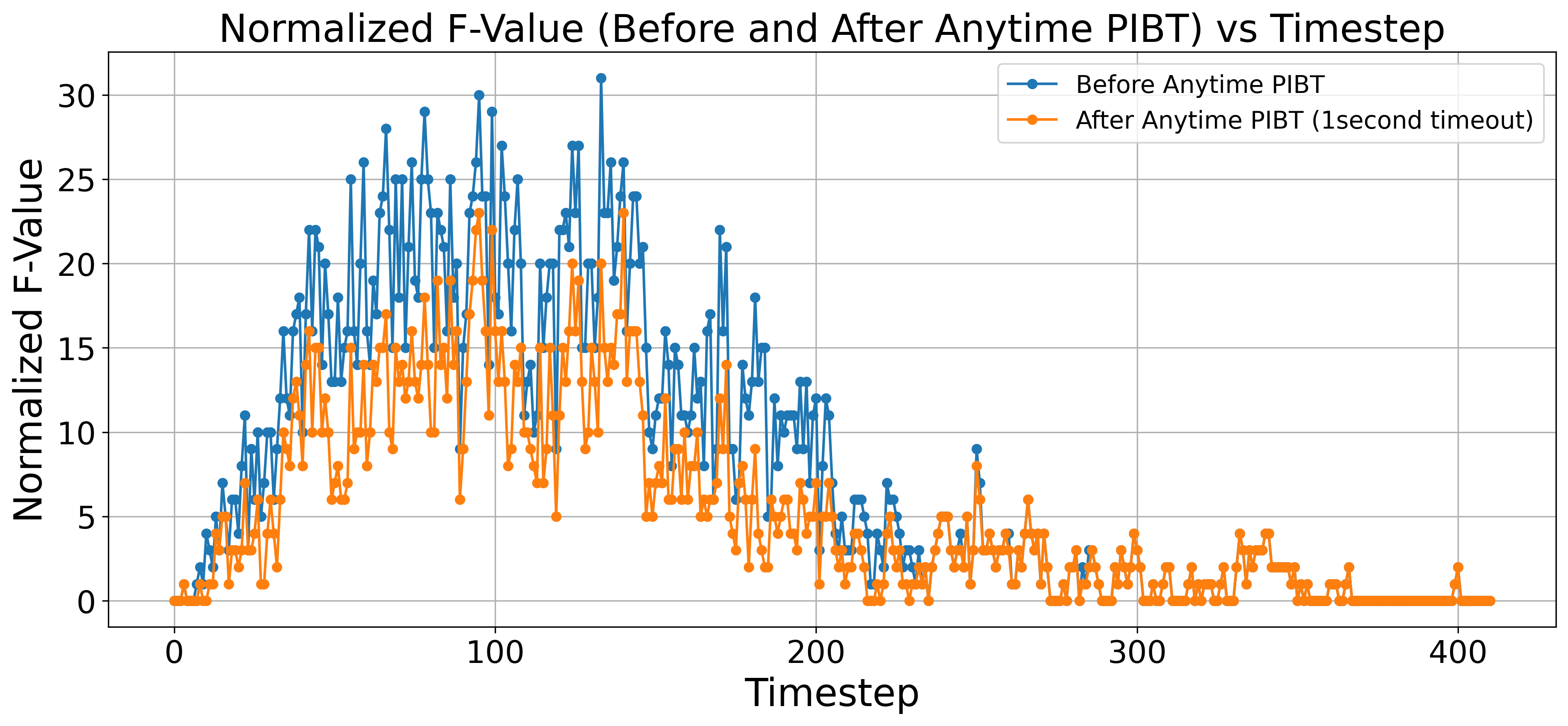}
    \vspace{-1em}
    \caption{We visualize the per timestep normalized f-values for 500 agents on map den520d. The normalized f-value is the solution f-value minus the lower-bound f-value (e.g. sum of each agent's best action). At every timestep, we plot the initial PIBT solution value (blue) and the solution after running Anytime PIBT for 1 second (orange).}
    \label{fig:timestep-costs}
    \vspace*{-0.5em}
\end{figure}

\begin{figure}[t!]
    \centering
    \includegraphics[width=0.45\textwidth]{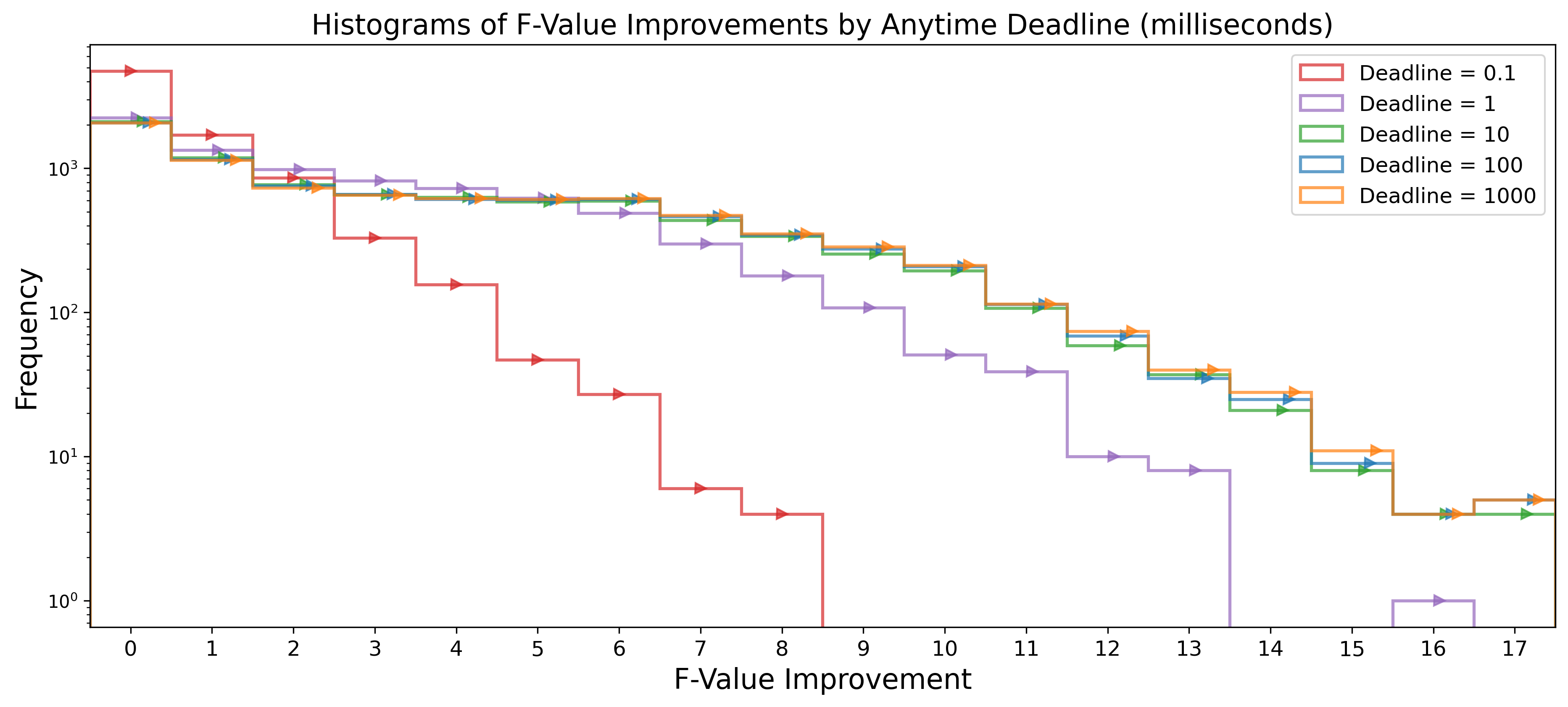}
    \vspace{-1em}
    \caption{We run Anytime PIBT and store the f-value improvements compared to PIBT's initial solution at different timeouts. Deadlines are in additional \textit{milliseconds} to the initial PIBT call (e.g., deadline of 0 is identical to PIBT).}
    \label{fig:histogram}
    \vspace*{-1em}
\end{figure}


\begin{figure*}[t!]  
    \centering
    \begin{subfigure}{0.85\textwidth}  
        \centering
        \includegraphics[width=\textwidth]{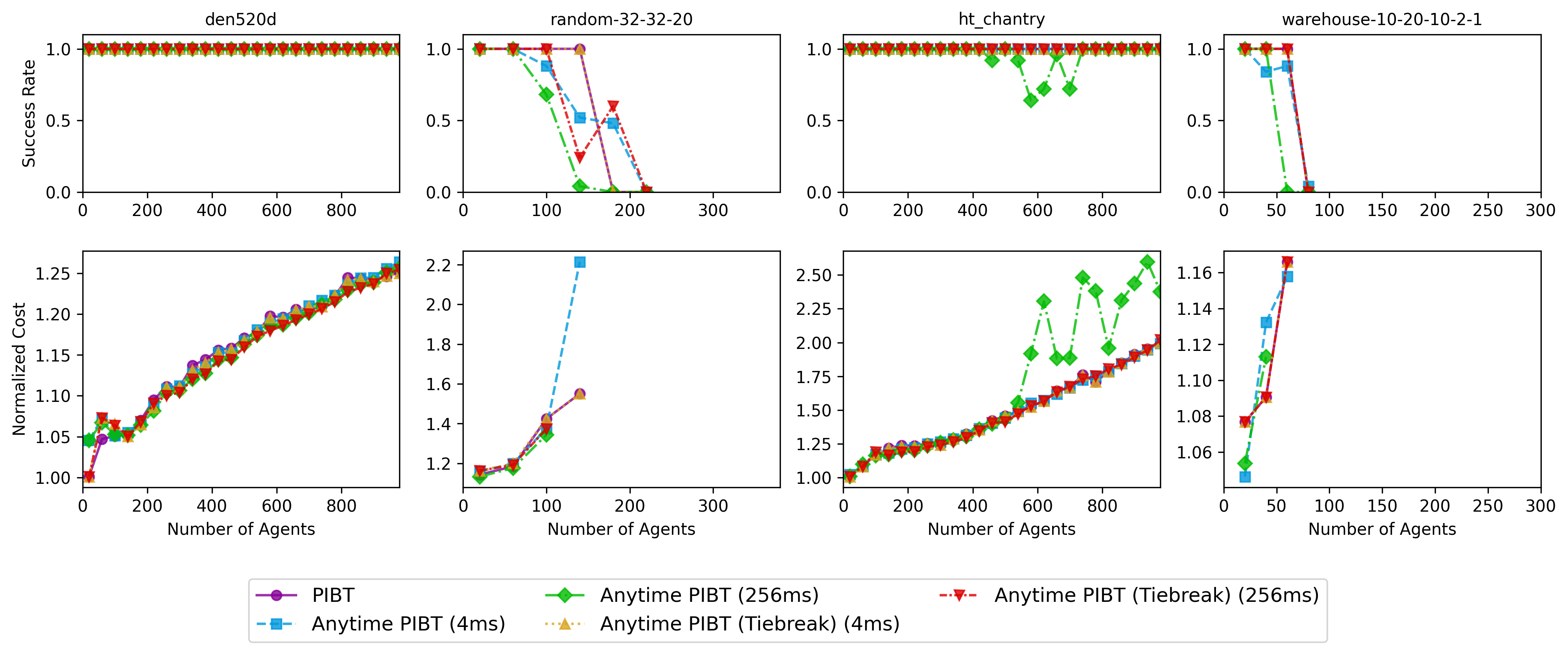}
        \caption{Standalone PIBT vs Anytime PIBT}
        \label{fig:pibt-results}
    \end{subfigure}
    

    \begin{subfigure}{0.85\textwidth}
        \centering
        \includegraphics[width=\textwidth]{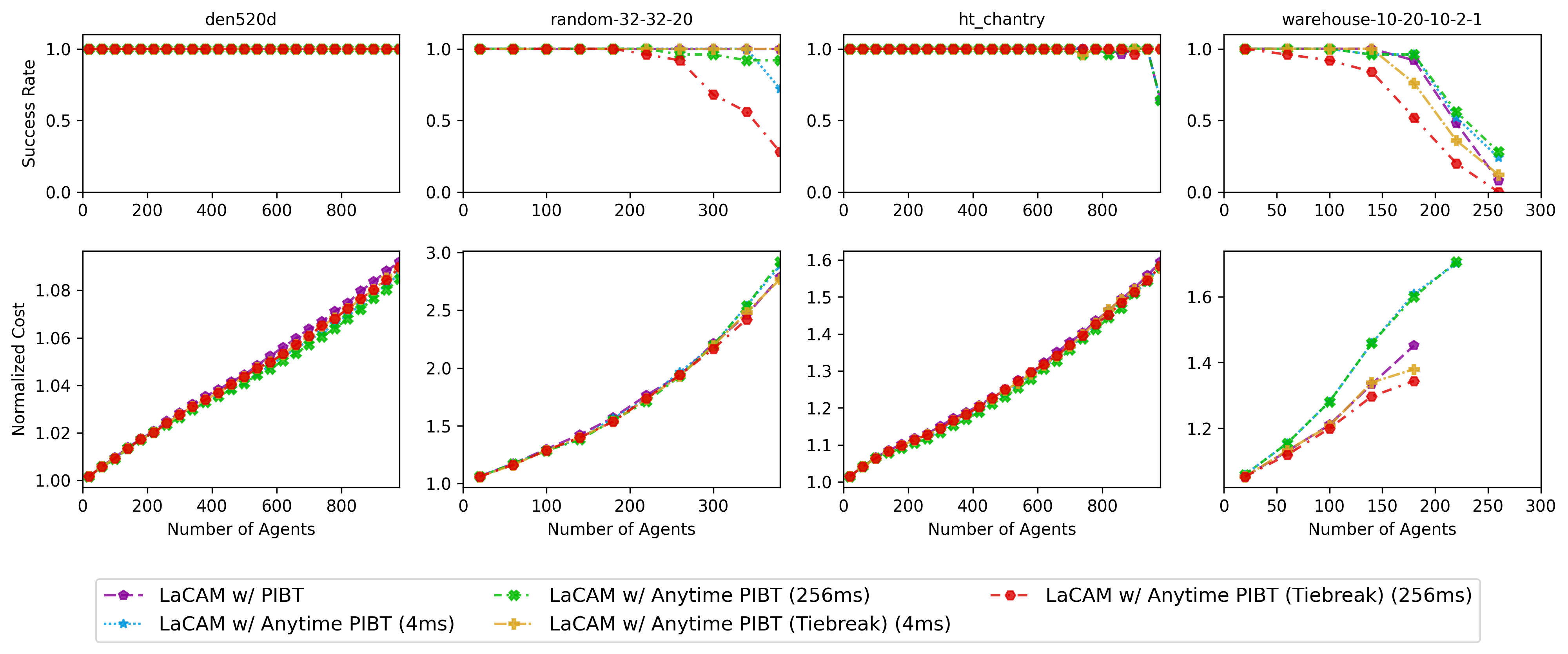}
        \caption{PIBT vs Anytime PIBT in LaCAM}
        \label{fig:lacam-results}
    \end{subfigure}
    

    \begin{subfigure}{0.85\textwidth}
        \centering
        \includegraphics[width=\textwidth]{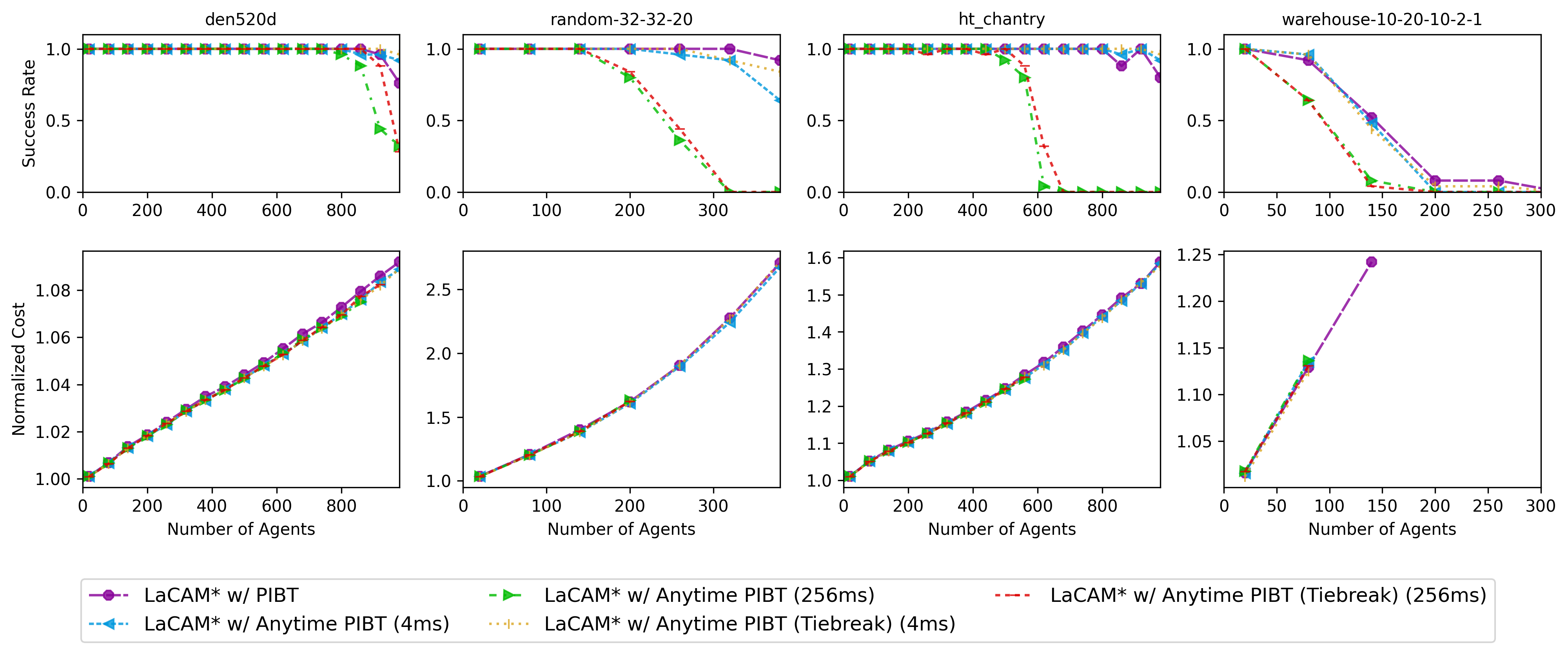}
        \caption{PIBT vs Anytime PIBT in LaCAM*}
        \label{fig:lacam-star-results}
    \end{subfigure}

    \caption{Evaluating PIBT vs Anytime PIBT}
    \label{fig:main-results}
\end{figure*}

\subsection{Full-Horizon Solutions}
Figure \ref{fig:main-results} shows the performance of Anytime PIBT vs PIBT across a variety of maps as standalone planners along as a configuration generator in LaCAM and LaCAM*. For each variant (standalone, in LaCAM, in LaCAM*), we compare using PIBT, normal Anytime PIBT with a 4 or 256 ms per-step cutoff, and Anytime PIBT Tiebreak with identical cutoffs. We plot the mean normalized cost (total cost divided by lower bound) for instances with a 50+\% success rate.

\subsubsection{PIBT vs Anytime PIBT}
Comparing PIBT and Anytime PIBT in Fig \ref{fig:pibt-results}, we see that Anytime PIBT has a perfect success rate on den520d and ht\_chantry, but worse performance than PIBT on random-32-32-20 and warehouse-10-20-10-2-1. This reveals that PIBT's greedy nature actually helps performance compared to Anytime PIBT. For example, when a low priority agent is blocking a high priority agent in a corridor, Anytime PIBT could find the optimal single-step solution which is for both agents to wait (in deadlock), whereas PIBT will greedily drive the higher priority agent forward. Anytime PIBT Tiebreak, on the other hand, does not suffer from this issue and has similar success rate as PIBT.


We additionally see that Anytime PIBT (with and without tiebreaking) is able to have small but noticeable improvement on overall solution quality. Even though this change is small, it is possible that this is the best that a single-step planner can do (without additional information) as the per timestep improvements seen in Fig \ref{fig:histogram} are on the order of 1-17 for 500 agents when the total per timestep cost is on the order of 500 (so we would expect only few percentage improvement based on this data).

\subsubsection{Anytime PIBT in LaCAM(*)}
We first focus on LaCAM results (Fig \ref{fig:lacam-results}). We observe that the solution quality are again slightly better with Anytime PIBT, especially with Tiebreaking. 
However, performance is noticeable worse in warehouse-10-20-10-2-1. We hypothesize this occurs as more agents are stuck in deadlock and accumulate cost with Anytime PIBT as the action generator (which is reflected in the standalone Anytime PIBT performance). 

We now focus on LaCAM* results (Fig \ref{fig:lacam-star-results}). Since LaCAM* is an anytime solver, using Anytime PIBT as a configuration generator means that we now tradeoff evaluating many ``poorer" quality high-level nodes or evaluating fewer ``higher" quality high-level nodes. For example, with a 60 second timeout and assuming the configuration generator takes a full 256 milliseconds per call, LaCAM* would generate at most 60/0.256 = 234 high-level nodes. LACAM* with a 4 millisecond timeout could generate 15,000 nodes instead. This discrepency explains why increasing the Anytime PIBT cutoff decreases the success rate in the more congested instances (e.g.,  ht\_chantry).
Looking closely at the cost, Anytime PIBT does decrease the cost slightly for the first three maps and marginally increases it for the last. However, overall the change in cost is small.

The overall lack of performance gain of Anytime PIBT in LaCAM and LaCAM* was quite unexpected, and implies a large gap between single-step solvers and the LaCAM methods. In particular, since LaCAM is very dependent on its single-step configuration generator, common intuition would think that improving the single-step action configuration would improve overall performance. Instead, even though Anytime PIBT consistently increases single-step performances over PIBT (Section \ref{sec:single-step}), this does not translate to gains with the LaCAM methods.


\section{Conclusion and Future Work}
PIBT is an extremely fast, scalable, and popular single-step solver. However, PIBT's speed comes at the expense of solution quality. An ideal anytime single-step solver would be as fast as PIBT but improve its solution over time. Anytime PIBT does exactly this.

Anytime PIBT interprets single-step MAPF as a recursive problem and therefore uses an anytime DFS to search through agents' actions. Anytime PIBT is able to save intermediate solutions and efficiently prune branches. Additionally, Anytime PIBT utilizes disjoint agent groups to decompose the entire recursive problem into smaller problems, and is able to dynamically update groups and eventually find the optimal single-step solution.

Experimentally, we find that Anytime PIBT is indeed able to improve single-step costs with milliseconds, and can even find the optimal single-step solution in many instances. However, counterintuitively these single-step improvements do not manifest themselves in full-horizon solutions when using Anytime PIBT inside LaCAM or LaCAM*. Still, we believe that Anytime PIBT is important as the first efficient anytime and eventually optimal single-step algorithm and see several avenues for future work.

\textbf{Anytime PIBT and OD:}
One interesting observation to OD \cite{standley2010operater_decomposition} is that Anytime PIBT could interpreted as an anytime version of OD that uses a depth first search and priority inheritance instead of an A* search. This perspective interprets a single recursive call of PIBT as generating an intermediate OD state moving only a single agent. PIBT itself is then searching using OD and a greedy depth first search, and is intelligently picking the next agent to expand using priority inheritance. 
Anytime PIBT's disjoint agent groups could also be interpreted as a version of OD's independence detection. The connection between OD and PIBT has not been described before and future work could benefit by further exploring this perspective. In particular, OD uses A*, PIBT uses DFS, while Anytime PIBT uses anytime DFS with pruning. It is possible that a different search algorithm using this perspective could be promising work.

\textbf{Integration with group cost functions:}
MAPF algorithms like CBS \cite{sharon2015cbs} and PIBT (and therefore methods that build off them like EECBS \cite{li2021eecbs} and LaCAM*) only natively support individual agent costs. Thus, incorporating a cost that is a function of the configuration of several agents (e.g., a formation cost) is not easily feasible in their frameworks. Since Anytime PIBT efficiently searches the joint configuration, it would be possible to modify Anytime PIBT to incorporate and search over these group costs.


\textbf{Single-Step Cost vs Full-Horizon:}
As seen in our experimental results, although we obtain single-step cost improvements, our overall full-horizon solution is not significantly better. Thus, broadly investigating how different single-step solvers and their properties affect full-horizon planning (e.g., in LaCAM) would be fruitful work.
Another avenue is to incorporate Anytime-PIBT with works such as \cite{chen2024traffic_flow} which explicitly compute heuristics online that incorporate congestion to avoid deadlock in the future.

Overall, Anytime PIBT is a promising single-step anytime algorithm with many potential future applications and extensions.

\bibliography{aaai25}

\clearpage

\appendix

\setcounter{figure}{0}
\renewcommand{\thefigure}{A\arabic{figure}}
\setcounter{table}{0}
\renewcommand{\thetable}{A\arabic{table}}

\section{Quick Summary}
\subsubsection{Recommended background readings:} Readers new to PIBT or LaCAM should read \citet{okumara2022pibt_jair} or \citet{okumura2023lacam} respectively. Readers new to Disjoint Agent Groups are recommended to read \citet{veerapaneni2024winc_mapf}.

\subsubsection{Motivation in respect to prior work:} PIBT is an extremely fast and effective single-step planner that is used in several other methods (e.g., LaCAM). However, PIBT is very greedy (which leads to poor solution quality) and cannot leverage extra planning time as it returns the first solution it finds. Ideally, we would have an ``anytime" version of PIBT that can leverage additional planning time to improve its solution cost. Our goal is to do exactly this.

\subsection{Intended Takeaways}
Our main contribution is designing Anytime PIBT, which finds the same initial solution as PIBT but then uses additional planning time to improve costs. We prove how Anytime PIBT will eventually find the optimal single-step solution, and empirically show how it can indeed do so even with hundreds of agents on certain instances. Anytime PIBT has a few algorithmic/theoretical insights and interesting experimental results which are our main takeaways:

1. Viewing single-step MAPF as a recursive process where planning for $N$ agents means we can plan for 1 one and get a smaller $N-1$ single-step MAPF instance. From this perspective, PIBT employs a DFS that returns the first solution it found. Thus, we can get anytime behavior in Anytime PIBT by employing an Anytime DFS.

2. The problem with the recursive perspective is that it grows exponentially with the number of agents, and thus performs poorly in practice. Our second key insight is that we can decompose our MAPF problem into groups of independent agents (formally defined as disjoint agent groups) and solve those separately. This makes Anytime PIBT run significantly faster. 

3. Anytime PIBT will provably find single-step optimal actions given sufficient time (and in practice can indeed find single-step optimal actions in less than a second).

4. Anytime PIBT has a consistent very small improvement in solution costs (e.g., less than 2 percent) when used for full horizon by itself or with LaCAM and LaCAM*. This is interesting as PIBT is typically thought of as a very greedy single-step planning, so our intuition is that improving the single-step solution cost would have a larger impact on overall costs. 

5. Finding optimal single-step paths can lead to poorer success rates as the optimal single-step solution as agents resting on their goal can become obstacles as pushing from their goal increases the cost. Another perspective is that PIBT's behavior of high priority agents pushing through low priority agents is long term beneficial despite it having high single-step solution costs. We therefore tested an Anytime-PIBT with tiebreaking that maintains priorities and only checks through agents best actions, which does result in a similar success rate to PIBT while still improving costs.


\end{document}